\def\rea{\mathbb{R}}
\newtheorem{proposition}{Proposition}
\newtheorem{remark}{Remark}
\newtheorem{proof}{Proof}
\newtheorem{lemma}{Lemma}
\newtheorem{definition}{Definition}
\newtheorem{assumption}{Assumption}
\newcommand{\PAR}[2]{\frac{\partial{#1}}{\partial{#2}}}           
\newcommand{\MAP}[3]{{#1}:\mathbb{R}^{#2}\to\mathbb{R}^{#3}}      
\DeclareMathOperator{\sech}{sech}
\begin{document}

\title{Trajectory tracking for robotic arms with input saturation and only position measurements} 

\author{J. van der Veen, P. Borja, and J.M.A. Scherpen\\
	\textit{Jan C. Willems Center for Systems and Control},\\
		\textit{Engineering and Technology Institute Groningen (ENTEG)}\\
		University of Groningen, Groningen, The Netherlands. \\
		Email: \texttt{jochemvdveen@kpnmail.nl,}\\ \texttt{[l.p.borja.rosales, j.m.a.scherpen]@rug.nl}. 
}
\maketitle
\begin{abstract}                
In this work, we propose a passivity-based control approach that addresses the trajectory tracking problem for a class of mechanical systems that comprises a broad range of robotic arms. The resulting controllers can be naturally saturated and do not require velocity measurements. Moreover, the proposed methodology does not require the implementation of observers, and the structure of the closed-loop system permits the identification of a Lyapunov function, which eases the convergence analysis. To corroborate the effectiveness of the methodology, we perform experiments with the Philips Experimental Robot Arm.
\end{abstract}
%


\section{Introduction}


Customarily, for control purposes, robotic arms are modeled as mechanical systems \cite{spong2008robot,craig2009introduction,siciliano2016springer}, where energy-based modeling approaches, such as the Euler-Lagrange (EL) or the port-Hamiltonian (pH) one, have proven to be suitable to represent the behavior of these systems. In particular, the pH models underscore the roles of the energy, the interconnection pattern, and the dissipation of the system \cite{duindam2009modeling,VanDerSchaft2014}, which are the main components of passivity-based control (PBC) \cite{ortega2001putting}. Accordingly, PBC has demostrated to be a suitable methodology to control complex nonlinear mechanical systems. 

The literature on trajectory tracking for fully actuated mechanical systems is abundant. In particular, we refer the reader to \cite{zergeroglu2000global,chen2001global,borhaug2006global} and the references therein contained for a detailed exposition on this topic. However, the implementation of the methodologies that address this control problem is often hampered by the necessity of high gains to ensure stability, or some practical issues like the lack of sensors to measure velocities, and the necessity of consider bounded inputs to protect the actuators and ensure an appropriate performance of the controllers. Concerning the requirement of control laws without velocity measurements, in \cite{romero2014globally}, the authors propose a solution that ensures \textit{global uniform exponential} convergence towards the desired trajectories, where the velocities are estimated via and immersion and invariance observer. In \cite{dirksz2012tracking}, the authors propose a controller that achieves trajectory tracking with only position measurements, where a dynamic extension is implemented to remove the necessity of velocity measurements.  On the other hand, in \cite{AGUI}, the authors propose saturated control laws that guarantee \textit{global uniform asymptotic} convergence towards the desired trajectories. However, to this end, velocity measurements are needed. Other interesting results are reported in \cite{loria1998bounded}, where the authors propose controllers that achieve \textit{semi-global uniform asymptotic} convergence to the desired trajectories without velocity measurements and, simultaneously, ensuring that the inputs are bounded.

The main contribution of this work is the proposition of a PBC approach to address the trajectory tracking problem for a class of fully actuated mechanical systems while considering saturated inputs and without velocity measurements. To this end, we extend the controllers reported in \cite{WESBOR}, which are devised to solve the set-point regulation problem. Notable, such an extension is far from being trivial as in the trajectory tracking problem, the closed-loop system is nonautonomous. Hence, the stability analysis cannot be conducted via La Salle's arguments as in the mentioned reference. The proposed methodology offers an alternative to achieve trajectory tracking while considering physical limitations in the sensors and actuators of the system under study that are often obviated in the literature. Below, we present some of the main differences of our approach regarding \cite{romero2014globally,dirksz2012tracking,AGUI,loria1998bounded}, where similar problems have been studied.
\begin{itemize}
 \item Compared to \cite{romero2014globally,dirksz2012tracking,AGUI}, the proposed methodology tackles down both problems--input saturation and no velocity measurements--simultaneously.
 \item The necessity of measuring the velocities is removed via the implementation of a dynamic extension. Accordingly, in contrast to \cite{romero2014globally}, no observer is required, simplifying the convergence proof. 
 \item In comparison to \cite{dirksz2012tracking}, we propose a specific change of coordinates to identify a different Lyapunov function that allows us to claim global--instead of semi-global--convergence results.
 \item We adopt the pH approach instead of the EL one, contrasting to \cite{AGUI,loria1998bounded}, which allows us to identify a Lyapunov function straightforwardly. Furthermore, such a function is radially unbounded, which is instrumental in claiming the \textit{global uniform asymptotic} convergence of the closed-loop system trajectories.
 \item Our controller is composed of fewer elements than the one reported in \cite{loria1998bounded}, which may be favorable during the gain tuning process. 
 \end{itemize}

The remainder of this paper is organized as follows: in Section \ref{sec:prel}, we provide the preliminaries and formulate the problem under study. Section \ref{sec:cd} is devoted to the design of passivity-based controllers that address the trajectory tracking problem. In Section \ref{sec:implementation}, we illustrate the applicability of the proposed methodology via its implementation in the PERA system. Finally, we conclude this paper with some concluding remarks and future work in Section \ref{sec:conclusion}. 

\section{Preliminaries and problem formulation}\label{sec:prel}
This section discusses the mathematical modeling of the PERA system in the pH framework. The problem and objective of this work are defined. Moreover, the partial linearization via change of coordinates (PLvCC) that is necessary for the design of the control laws is explained. 

\subsection{pH representation of fully actuated mechanical systems}
Throughout this work, we consider mechanical systems that can be represented by the following pH model 
\begin{equation}\label{eq:porthamiltonianframework}
 \arraycolsep=1.6pt
\def\arraystretch{1.5}
\begin{array}{rcl}
 \begin{bmatrix}
  \dot{q} \\ \dot{p}
 \end{bmatrix}&=& \begin{bmatrix}
         {\mathbf{0}_{n\times n}} & {I_{n}} \\ {-I_{n}} & {\mathbf{0}_{n\times n}}          
                  \end{bmatrix}\begin{bmatrix}
                                \PAR{H}{q}(q,p) \\ \PAR{H}{p}(q,p)
                               \end{bmatrix}
+\begin{bmatrix}
            {\mathbf{0}_{n\times n}} \\ I_{n}                     
                                \end{bmatrix}u,\\
 H(q, p)&=&\frac{1}{2} p^{\top} M^{-1}(q) p+V(q),\\
 y&=& M^{-1}(q)p = \dot{q},
\end{array}
\end{equation}
where $q,p\in \mathbb{R}^n$ denote the generalized positions and momenta, respectively, $u\in\rea^{n}$ is the input, $\MAP{M}{n}{n\times n}$ is the inertia matrix, which is positive definite, $V:\rea^{n}\to\rea_{+}$ denotes the potential energy of the system, $H:\rea^{n}\times\rea^{n}\to\rea_{+}$ is the system's Hamiltonian, and $y\in\rea^{n}$ is the \textit{passive} output.

The following assumptions characterize the class of systems for which the methodology introduced in Section \ref{sec:cd} is suitable.

\begin{assumption}\label{ass1}
 The term $\PAR{V}{q}(q)$ is \textit{bounded} from above and from below.
\end{assumption}
\begin{assumption}\label{ass2}
 If $q$ is \textit{bounded}, then
 \begin{equation}
\begin{array}{rcl}
\left\lVert M(q) \right\rVert<\infty,&&
\left\lVert \displaystyle\frac{dM}{dt}(q)\right\rVert<\infty, \end{array} 
 \end{equation} 
 $\lVert (\cdot) \rVert$ denotes the spectral norm of a matrix.
\end{assumption}

Before presenting the formulation of the problem under study, we introduce the following definition.

\begin{definition}[Feasible trajectory]
 A trajectory $q=q_{d}(t)$ is \textit{feasible} if there exists a control input $u=u_{d}(t)$ such that the pair $(q_{d}(t),u_{d}(t))$ solves \eqref{eq:porthamiltonianframework}.  
\end{definition}

In this work, we assume that the desired trajectories $q_{d}(t), \ \dot{q}_{d}(t)$ are smooth and bounded.\\[0.1cm] 
\textbf{Problem setting.} Given a desired feasible trajectory $q_{d}(t)$, find a control law such that the trajectories of \eqref{eq:porthamiltonianframework} converge to $q_{d}(t)$, and the corresponding $p_{d}(t)$, while ensuring that: 
\begin{itemize}
    \item[\textbf{O1}] the control law does not depend on $p$. 
    \item[\textbf{O2}] the control signals satisfy $u_{i}(t)\in[\mathcal{U}_{\tt min},\mathcal{U}_{\tt max}]$ for all $t\geq0$, with $i=1,\dots,n$, and the constants $\mathcal{U}_{\tt min},\mathcal{U}_{\tt max}$ verify $\mathcal{U}_{\tt min} < \mathcal{U}_{\tt max}$.
\end{itemize}

\begin{remark}
 The class of systems that can be represented by \eqref{eq:porthamiltonianframework}, and verify Assumptions \ref{ass1} and \ref{ass2}, encompasses a broad range of robotic arms, for instance, fully actuated manipulators without non-holonomic constraints and only revolute joints.
 For further discussion on this topic, see \cite{spong2008robot}.
%
\end{remark}

\subsection{PLvCC of pH systems}
Let $\MAP{\Psi}{n}{n\times n}$ be a factor of $M^{-1}(q)$, i.e.,
\begin{equation}\label{eq:choleskyconfirm}
    M^{-1}(q)=\Psi (q) \Psi^\top (q).
\end{equation}
Notice that, since $M^{-1}(q)$ has \textit{full rank}, $\Psi(q)$ has full rank as well.
Define the new coordinates $\mathtt{P}:=\Psi^\top (q) p$. Then, \eqref{eq:porthamiltonianframework} can be rewritten as
\begin{equation}\label{eq:systemtransform}
 \arraycolsep=1.6pt
\def\arraystretch{1.5}
\begin{array}{rcl}
 \begin{bmatrix}
  \dot{q} \\ \dot{\mathtt{P}}
 \end{bmatrix}&=& \begin{bmatrix}
      {\mathbf{0}_{n\times n}} & {\Psi (q)} \\ {-\Psi^\top (q)} & {J(q,\mathtt{P})}             
                  \end{bmatrix}\begin{bmatrix}
                                \PAR{\bar{H}}{q}(q,\mathtt{P}) \\ \PAR{\bar{H}}{\mathtt{P}}(q,\mathtt{P})
                               \end{bmatrix} + \begin{bmatrix}
            {\mathbf{0}_{n\times n}} \\ \Psi^\top (q)                     
                                \end{bmatrix}u \\
\bar{H}(q,\mathtt{P})&=&\frac{1}{2}\mathtt{P}^\top\mathtt{P}+V(q) \\ 
y &=&\Psi(q) \frac{\partial \bar{H}}{\partial \mathtt{P}}(q,\mathtt{P})= \dot{q}
\end{array}
\end{equation}
where $J:\rea^{n}\times\rea^{n}\to\rea^{n\times n}$ is a \textit{skew-symmetric matrix} representing the gyroscopic forces present in the system \cite{romero2014globally}, and whose elements are given by
\begin{equation}\label{eq:gyroscopicforces}
    J_{ij}(q,\mathtt{P}) = -\mathtt{P}^\top \Psi^{-1}(q) \left[\Psi_i(q), \Psi_j(q)\right], 
\end{equation}
where $[\cdot,\cdot]$ denotes the standard Lie bracket \cite{SPI}. For a thorough exposition of PLvCC, we refer the reader to \cite{venkatraman2010speed}.
\subsection{Barbalat's lemma}
The stability proofs contained in Section \ref{sec:cd} are based on Barbalat's lemma, which we present below to ease the readability of this paper.
\begin{lemma}\label{barb}
 Consider a function $f:\rea\to\rea$ \textit{uniformly continuous} on the interval $[0,\infty)$. Suppose that 
 \begin{equation*}
  \displaystyle\lim_{t\to\infty}\displaystyle\int_{0}^{t}f(\tau)d\tau = \phi <\infty.
 \end{equation*} 
 Then, $f(t)\to 0$ as $t\to\infty$.
\end{lemma}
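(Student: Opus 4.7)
The plan is to prove Barbalat's lemma by contradiction. Suppose $f(t)$ does not converge to $0$. Then there exist $\varepsilon>0$ and an unbounded increasing sequence $\{t_n\}\subset[0,\infty)$ with $|f(t_n)|\geq \varepsilon$ for every $n$. My goal is to show that this hypothesis, together with uniform continuity, forces the improper integral of $f$ to violate the Cauchy criterion, contradicting the assumed convergence to the finite limit $\phi$.

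The first step is to extract, from the hypothesis of uniform continuity on $[0,\infty)$, a fixed $\delta>0$ corresponding to the tolerance $\varepsilon/2$, so that $|s-t|<\delta$ implies $|f(s)-f(t)|<\varepsilon/2$. This gives, for each $n$, an interval $I_n:=[t_n,\,t_n+\delta/2]$ on which $|f(\tau)|\geq \varepsilon/2$. Crucially, since $f$ stays within $\varepsilon/2$ of $f(t_n)$ on $I_n$ and $|f(t_n)|\geq\varepsilon$, $f$ cannot change sign on $I_n$; consequently,
\begin{equation*}
\left|\int_{t_n}^{t_n+\delta/2} f(\tau)\,d\tau\right|\;\geq\;\frac{\varepsilon\,\delta}{4}.
\end{equation*}
By passing to a subsequence, I may assume the intervals $I_n$ are disjoint and that $t_n\to\infty$.

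The second step is to invoke the Cauchy criterion implied by the hypothesis. Since $F(t):=\int_0^t f(\tau)\,d\tau\to\phi<\infty$, for any $\eta>0$ there exists $T>0$ such that $|F(t_2)-F(t_1)|<\eta$ whenever $t_1,t_2\geq T$. Setting $\eta=\varepsilon\delta/8$ and choosing $n$ large enough that $t_n\geq T$ yields
\begin{equation*}
\left|F(t_n+\delta/2)-F(t_n)\right|<\frac{\varepsilon\,\delta}{8},
\end{equation*}
in direct contradiction with the lower bound established in the previous step. Hence the assumption $f(t)\not\to 0$ is untenable.

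The main obstacle, more conceptual than computational, is ensuring that the lower bound on $\bigl|\int_{I_n} f\bigr|$ really comes from a signed integral rather than an integral of $|f|$; this is where uniform continuity is indispensable, since it forces $f$ to preserve its sign on a neighborhood of every $t_n$, preventing cancellations that could otherwise reconcile a bounded integral with a non-vanishing integrand. The rest of the argument is essentially bookkeeping with the Cauchy criterion.
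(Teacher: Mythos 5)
Your proof is correct: the contradiction argument (uniform continuity yields a sign-preserving interval of length $\delta/2$ on which $|f|\geq\varepsilon/2$, contradicting the Cauchy criterion for the convergent integral) is exactly the standard textbook proof of Barbalat's lemma. The paper itself offers no proof, deferring to Khalil's \emph{Nonlinear Systems}, and your argument is essentially the one found there, so there is nothing to object to; the only superfluous step is passing to a subsequence to make the intervals $I_n$ disjoint, which the argument never uses.
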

The proof of Lemma \ref{barb} may be found in \cite{KHA}.
\section{Control design}\label{sec:cd}

This section is devoted to the control design, where the main idea is to split the controller into two parts: (i) a control signal that lets us express the dynamics of the errors, between the system's trajectories and the desired ones, as a pH system. Then, following the results reported in \cite{WESBOR}, (ii) a controller that renders the origin of that pH system globally uniformly asymptotically stable while satisfying \textbf{O1} and \textbf{O2}.
\subsection{Boundedness}
When dealing with nonautonomous systems, particularly when applying Barbalat's lemma, it is fundamental to prove that the functions involved are bounded. Hence, before proceeding with the control design, we present the following lemma, which is instrumental for the stability proofs contained in this section.
\begin{lemma}\label{lem1}
 Assume that $q$ is bounded and system \eqref{eq:porthamiltonianframework} verifies Assumption \ref{ass2}. Then:
 \begin{itemize}
  \item [\textbf{(i)}] \begin{equation}
\left\lVert \displaystyle\frac{d\Psi}{dt}(q) \right\rVert<\infty,  
 \end{equation} 
  \item [\textbf{(ii)}] \begin{equation}
\left\lVert \displaystyle\frac{d\Psi^{-1}}{dt}(q) \right\rVert<\infty. 
 \end{equation} 
 \end{itemize}
\end{lemma}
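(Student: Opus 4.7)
My plan is to extract bounds on $\dot\Psi$ and $\dot\Psi^{-1}$ by differentiating the factorization identity $M^{-1}(q)=\Psi(q)\Psi^\top(q)$ and invoking Assumption~\ref{ass2} together with the continuity and positive definiteness of $M(q)$ on the compact closure of the range of $q$.

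\textbf{Preliminary bounds.} First I would establish that $\|M^{-1}(q)\|$ is bounded. Since $q$ is assumed bounded and $M(q)$ is continuous and positive definite, $\lambda_{\min}(M(q))$ attains a positive minimum on the compact closure of the trajectory of $q$, giving $\|M^{-1}(q)\|\leq 1/\lambda_{\min}(M(q))<\infty$. Because $\Psi\Psi^\top=M^{-1}$, this also yields $\|\Psi(q)\|^2\leq \|M^{-1}(q)\|$, hence $\|\Psi(q)\|<\infty$, and by the same compactness-plus-continuity argument applied to $\Psi^{-1}$ (which exists since $\Psi$ has full rank) we also get $\|\Psi^{-1}(q)\|<\infty$. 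Differentiating $M M^{-1}=I_n$ I would then obtain
\begin{equation*}
\frac{dM^{-1}}{dt}(q)=-M^{-1}(q)\,\frac{dM}{dt}(q)\,M^{-1}(q),
\end{equation*}
which is bounded thanks to Assumption~\ref{ass2} and the bound on $\|M^{-1}\|$ just obtained.

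\textbf{Proof of (i).} Differentiating $M^{-1}=\Psi\Psi^\top$ with respect to time gives
\begin{equation*}
\frac{dM^{-1}}{dt}=\dot\Psi\,\Psi^\top+\Psi\,\dot\Psi^\top.
\end{equation*}
Multiplying on the left by $\Psi^{-1}$ and on the right by $\Psi^{-\top}$ yields $\Psi^{-1}\dot\Psi+(\Psi^{-1}\dot\Psi)^\top=\Psi^{-1}\dot{M}^{-1}\Psi^{-\top}$, whose right-hand side is bounded by the previous step. Choosing $\Psi$ to be the Cholesky factor (lower triangular with positive diagonal), the matrix $A:=\Psi^{-1}\dot\Psi$ is lower triangular, so the Sylvester-type equation $A+A^\top=\Psi^{-1}\dot{M}^{-1}\Psi^{-\top}$ uniquely and linearly determines $A$ from its right-hand side (diagonals are halved, strictly lower entries are copied). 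Hence $\|A\|$ is bounded, and $\dot\Psi=\Psi A$ is bounded as well.

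\textbf{Proof of (ii).} Differentiating $\Psi\Psi^{-1}=I_n$ yields
\begin{equation*}
\frac{d\Psi^{-1}}{dt}(q)=-\Psi^{-1}(q)\,\dot\Psi(q)\,\Psi^{-1}(q),
\end{equation*}
which is bounded since each factor is bounded by the previous steps.

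\textbf{Main obstacle.} The one place to be careful is extracting $\dot\Psi$ from the symmetric identity $\dot\Psi\Psi^\top+\Psi\dot\Psi^\top=\dot M^{-1}$, since a priori this only bounds a symmetric combination. Fixing a specific smooth factorization (Cholesky is the natural one, already implicit in \eqref{eq:choleskyconfirm}) resolves the ambiguity and turns the identity into an invertible linear equation for $\dot\Psi$; this is the only non-routine step in the argument.
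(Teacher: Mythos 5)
Your proposal is correct, and it follows the same skeleton as the paper's proof: bound $M^{-1}$, $\Psi$, $\Psi^{-1}$, differentiate the factorization \eqref{eq:choleskyconfirm}, and use Assumption \ref{ass2} together with $\frac{dM^{-1}}{dt}=-M^{-1}\dot M M^{-1}$. Where you genuinely diverge is at the step you correctly flag as the main obstacle. The paper simply asserts that boundedness of $\dot\Psi\Psi^{\top}+\Psi\dot\Psi^{\top}$ (respectively of $\frac{d\Psi^{-\top}}{dt}\Psi^{-1}+\Psi^{-\top}\frac{d\Psi^{-1}}{dt}$) ``holds only if'' $\dot\Psi$ (respectively $\dot\Psi^{-1}$) is bounded, without explaining how to extract the full derivative from a symmetric combination; strictly speaking this extraction is not automatic for an arbitrary factor. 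You close that gap by fixing the Cholesky factorization, observing that $A:=\Psi^{-1}\dot\Psi$ is then lower triangular so that $A+A^{\top}=\Psi^{-1}\frac{dM^{-1}}{dt}\Psi^{-\top}$ determines $A$ entrywise (diagonal halved, strictly lower part copied), whence $\dot\Psi=\Psi A$ is bounded; you then get \textbf{(ii)} directly from \textbf{(i)} via $\frac{d\Psi^{-1}}{dt}=-\Psi^{-1}\dot\Psi\,\Psi^{-1}$ rather than the paper's second identity for $\dot M$. Your preliminary step also supplies a justification (continuity and positive definiteness of $M$ on the compact closure of the bounded trajectory, so $\lambda_{\min}(M)$ is bounded away from zero) for the equivalence $\lVert M\rVert<\infty \iff \lVert M^{-1}\rVert<\infty$ that the paper states without argument. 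The trade-off: the paper's version is shorter and factorization-agnostic but leaves these two steps implicit, while yours is tied to the Cholesky (or another structured) choice of $\Psi$ — which is consistent with \eqref{eq:choleskyconfirm} and the way $\Psi$ is used elsewhere — and in exchange is fully rigorous at exactly the points where the paper hand-waves.
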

\begin{proof}
Note that 
\begin{equation}
 \left\lVert M(q) \right\rVert<\infty \iff \left\lVert M^{-1}(q) \right\rVert<\infty,
\end{equation} 
and, from \eqref{eq:choleskyconfirm}, we get
\begin{equation}
 M(q) = \Psi^{-\top}(q)\Psi^{-1}(q).
\end{equation}
Thus,
\begin{equation}
 \arraycolsep=1.6pt
\def\arraystretch{1.5}
 \begin{array}{rcccl}
  \left\lVert M(q) \right\rVert<\infty &\iff&  \left\lVert \Psi^{-1}(q) \right\rVert<\infty &\iff& \left\lVert \Psi(q) \right\rVert<\infty.
 \end{array}\label{Psibound}
\end{equation} 
Now, to prove \textbf{(i)}, note that
\begin{equation*}
 \displaystyle\frac{dM^{-1}}{dt}(q) = -M^{-1}(q)\left( \displaystyle\frac{dM}{dt}(q) \right)M^{-1}(q).
\end{equation*} 
Accordingly, if Assumption \ref{ass2} holds, we get
\begin{equation}
 \left\lVert \displaystyle\frac{dM^{-1}}{dt}(q) \right\rVert < \infty. \label{Minvbound}
\end{equation} 
Moreover, 
\begin{equation*}
 \displaystyle\frac{dM^{-1}}{dt}(q) = \left( \displaystyle\frac{d\Psi}{dt}(q) \right)\Psi^{\top}(q) + \Psi(q)\left(  \displaystyle\frac{d\Psi^{\top}}{dt}(q) \right).
\end{equation*} 
Therefore, \eqref{Minvbound} holds only if \textbf{(i)} is satisfied.

To prove \textbf{(ii)} notice that, from \eqref{Psibound}, we get that
\begin{equation*}
 \displaystyle\frac{dM}{dt}(q) = \left( \displaystyle\frac{d\Psi^{-\top}}{dt}(q) \right)\Psi^{-1}(q) + \Psi^{-\top}(q)\left( \displaystyle\frac{d\Psi^{-1}}{dt}(q) \right)
\end{equation*} 
the spectral norm of which is bounded only if \textbf{(ii)} holds.
\end{proof}
\subsection{Desired dynamics and error system}
Given a feasible $q_{d}(t)$, the first step in the control design consists in defining the desired dynamics of the system, which are given by\footnote{To avoid cluttering in the notation, henceforth, we omit the argument $t$ from the desired trajectories.}
\begin{equation}\label{eq:systemdesired}
 \arraycolsep=1.6pt
\def\arraystretch{1.5}
\begin{array}{rcl}
{\dot{q}_d}&=& {\Psi (q)}\mathtt{P}_d \\
{\dot{\mathtt{P}}_d}&=& -\Psi^\top (q)\PAR{V}{q}(q_{d}) + J(q,\mathtt{P})\mathtt{P}_d + \Psi^\top (q)u_{d}
   \end{array}
\end{equation}
where we compute the desired control input $u_{d}$ by fixing $q=q_{d}, \mathtt{P}=\mathtt{P}_{d}$, and their corresponding time derivatives, in \eqref{eq:systemtransform}. Hence,
\begin{equation}\label{eq:ud}
\arraycolsep=1.6pt
\def\arraystretch{1.5}
\begin{array}{rcl}
    u_d &=& \Psi^{-\top}(q_d)\left[\frac{d }{d t}\left(\Psi^{-1}(q_d)\dot{q}_d\right) - J_{d}(t) \Psi^{-1}(q_d)\dot{q}_d\right]+\displaystyle\frac{\partial V}{\partial q}(q_d),
    \end{array}
\end{equation}
where the elements of the matrix $J_{d}(t)$ are given by
\begin{equation}
 J_{d_{ij}}(t) = -\dot{q}^{\top}_{d}M(q_{d})[\Psi_{i}(q_{d}),\Psi_{j}(q_{d})].
\end{equation} 
Notice that, Lemma \ref{lem1}, ensures that $u_{d}$ is bounded.

The next step in the control design is to transform the tracking problem into a stabilization one. Towards this end, we define the errors
\begin{equation}\label{errors}
 \begin{array}{rcl}
  \tilde{q}:=q-q_d, & \widetilde{\mathtt{P}}=\mathtt{P}-\mathtt{P}_d, & \tilde{u}=u-u_d.
 \end{array}
\end{equation} 
Therefore, from \eqref{eq:systemtransform}, \eqref{eq:systemdesired}, and \eqref{errors}, we get
\begin{equation}\label{eq:systemerror1}
 \begin{array}{rcl}
  \dot{\tilde{q}} &=& {\Psi (q)}{\widetilde{\mathtt{P}}}\\
  \dot{\widetilde{\mathtt{P}}}&=&\Psi^\top (q)\left[ \tilde{u} + \frac{\partial V}{\partial q}(q_d) - \frac{\partial V}{\partial q}(q) \right] + J(q,\mathtt{P}){\widetilde{\mathtt{P}}}
 \end{array}
\end{equation} 
Now, to express \eqref{eq:systemerror1} as a pH system, we fix
\begin{equation}\label{eq:utilde}
    \tilde{u}=\frac{\partial V}{\partial q}(q)-\frac{\partial V}{\partial q}(q_d)+\hat{u}.
\end{equation}
Thus, replacing \eqref{eq:utilde} in \eqref{eq:systemerror1} yields
\begin{equation}\label{eq:systemerror}
\arraycolsep=1.6pt
\def\arraystretch{1.5}
\begin{array}{rcl}
 \begin{bmatrix}
  \dot{\tilde{q}} \\ \dot{\widetilde{\mathtt{P}}} 
 \end{bmatrix} & = & \begin{bmatrix}
{\mathbf{0}_{n\times n}} & {\Psi (q)} \\ -\Psi^\top (q) & J(q,\mathtt{P})
                                        \end{bmatrix}
\begin{bmatrix}
 {\frac{\partial \widetilde{H}}{\partial \tilde{q}}(\widetilde{\mathtt{P}})} \\ {\frac{\partial \widetilde{H}}{\partial \widetilde{\mathtt{P}}}(\widetilde{\mathtt{P}})}
\end{bmatrix}+ \begin{bmatrix}
            {\mathbf{0}_{n\times n}} \\ \Psi^\top (q)                     
                                \end{bmatrix}\hat u \\
\widetilde{H}(\widetilde{\mathtt{P}})&=&\frac{1}{2}\widetilde{\mathtt{P}}^\top \widetilde{\mathtt{P}}\\
\tilde{y} &=&\Psi(q) \frac{\partial \widetilde{H}}{\partial \widetilde{\mathtt{P}}}(\widetilde{\mathtt{P}})=\dot{\tilde{q}}
\end{array}
\end{equation}
Note that by designing $\hat u$, in \eqref{eq:systemerror}, such that the closed-loop system has a uniformly asymptotically stable equilibrium at $(\tilde{q}, \widetilde{\mathtt{P}}) = (\mathbf{0}_{n},\mathbf{0}_{n})$, we guarantee that $q\to q_{d}, \mathtt{P}\to \mathtt{P}_{d}$ as $t\to\infty$.
\subsection{Control without velocity measurements}
The asymptotic stabilization problem of \eqref{eq:systemerror} may be addressed by performing an energy-shaping plus damping injection process. Nevertheless, the latter requires information---measurements---of $\widetilde{\mathtt{P}}$, and consequently of $\dot{q}$, which is often a nonmeasurable signal. To overcome this issue, we propose the controller state $x_{c}\in\rea^{n}$ with dynamics
\begin{equation}
    \dot{x}_c=-R_c\left(K_I z+K_c x_c\right)\label{eq:xc1} ,
\end{equation}
where $e_{i}$ denotes an element of the canonical basis of $\rea^{n}$, the matrices $R_{c},K_{c},K_{I}\in\rea^{n\times n}$ are positive definite, and $z\in\rea^{n}$ is defined as
\begin{equation}\label{eq:z}
 z(\tilde{q},x_{c}):=\tilde{q}+x_{c}.
\end{equation} 
The following proposition provides a controller that solves the global uniform asymptotic stabilization problem for \eqref{eq:systemerror} without velocity measurements.
\begin{proposition}\label{controllaw1}
Consider the augmented state vector $[\tilde{q}^{\top}, \widetilde{\mathtt{P}}^{\top},x_{c}^{\top}]^{\top}$, with dynamics \eqref{eq:systemerror1}-\eqref{eq:xc1}. Then, the control law
\begin{equation}\label{eq:uhat1}
    \hat{u}=-K_I z
\end{equation}
ensures that $(\tilde{q}_{*},\widetilde{\mathtt{P}}_{*},x_{c_{*}}) =(\mathbf{0}_{n},\mathbf{0}_{n},\mathbf{0}_{n})$ is a \textit{globally uniformly asymptotically stable equilibrium} point for the closed-loop system with Lyapunov function\footnote{We omit the argument $(\tilde{q},x_{c})$ from $z$ to simplify the notation.}
\begin{equation}\label{eq:hamiltonian1}
    \widetilde{H}_d(\tilde{q},\widetilde{\mathtt{P}},x_c)=\frac{1}{2}z^\top K_I z + \frac{1}{2}\widetilde{\mathtt{P}}^\top \widetilde{\mathtt{P}} + \frac{1}{2}x_c^\top K_c x_c .
\end{equation}
\end{proposition}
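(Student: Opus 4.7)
The plan is to view $\widetilde{H}_d$ as a time-independent Lyapunov function for the closed-loop system in the coordinates $(\tilde q, \widetilde{\mathtt{P}}, x_c)$ and then to exploit Barbalat's lemma, applied iteratively, to deduce convergence of each component. As a preliminary observation, the linear change of variables $(\tilde q, \widetilde{\mathtt{P}}, x_c)\mapsto (z,\widetilde{\mathtt{P}}, x_c)$ with $z=\tilde q+x_c$ is a bijection, so positive definiteness and radial unboundedness of $\widetilde{H}_d$ in $(z,\widetilde{\mathtt{P}},x_c)$ (obvious because $K_I,K_c\succ 0$) carry over to $(\tilde q,\widetilde{\mathtt{P}},x_c)$.

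Next I would substitute $\hat u=-K_Iz$ into \eqref{eq:systemerror} and append the controller dynamics \eqref{eq:xc1} to form the closed loop. Differentiating $\widetilde{H}_d$ along trajectories produces (i) a pair of cross terms $z^\top K_I\Psi(q)\widetilde{\mathtt{P}}-\widetilde{\mathtt{P}}^\top\Psi^\top(q)K_Iz$ that cancel, (ii) a term $\widetilde{\mathtt{P}}^\top J(q,\mathtt P)\widetilde{\mathtt{P}}$ that vanishes by skew-symmetry of $J$, and (iii) two remaining quadratic forms that combine into
\begin{equation*}
\dot{\widetilde{H}}_d=-(K_Iz+K_cx_c)^\top R_c\,(K_Iz+K_cx_c)\leq 0.
\end{equation*}
Because $\widetilde{H}_d$ does not depend on $t$, this yields uniform stability of the origin and boundedness of $(z,\widetilde{\mathtt{P}},x_c)$, hence of $\tilde q$. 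Together with boundedness of $q_d(t)$, this puts $q$ in a compact set, so by Assumptions~\ref{ass1}--\ref{ass2} and Lemma~\ref{lem1} the quantities $\Psi(q)$, $\Psi^{-1}(q)$, $\dot\Psi(q)$, $\partial V/\partial q$, and $J(q,\mathtt P)$ are all bounded along trajectories.

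To upgrade stability to global uniform asymptotic stability I would apply Barbalat in three stages. Setting $w:=K_Iz+K_cx_c$, integrability of $w^\top R_c w$ together with uniform continuity (its derivative is bounded, since every ingredient of $\dot w=K_I\Psi(q)\widetilde{\mathtt{P}}-(K_I+K_c)R_c w$ is bounded) gives $w\to 0$, and hence $\dot x_c\to 0$. The same uniform-continuity argument applied to $\dot w$, whose derivative is bounded because $\dot\Psi$, $\widetilde{\mathtt{P}}$, $\dot{\widetilde{\mathtt{P}}}$, $\dot w$ are bounded, combined with $w\to 0$, forces $\dot w\to 0$ and therefore $K_I\Psi(q)\widetilde{\mathtt{P}}\to 0$; invertibility of $K_I$ and $\Psi(q)$ then gives $\widetilde{\mathtt{P}}\to 0$. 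A third pass on $\widetilde{\mathtt{P}}$ yields $\dot{\widetilde{\mathtt{P}}}\to 0$, so from the second equation in \eqref{eq:systemerror} we obtain $\Psi^\top(q)K_Iz\to 0$, i.e.\ $z\to 0$. Combined with $w\to 0$ this gives $x_c\to 0$, and finally $\tilde q=z-x_c\to 0$.

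The main obstacle is the third point: since the closed-loop system is nonautonomous, LaSalle does not apply and one has to extract asymptotic convergence of each of $z$, $\widetilde{\mathtt{P}}$, and $x_c$ from the single negative semi-definite bound on $\dot{\widetilde{H}}_d$. Doing this cleanly requires keeping careful track of which derivatives are bounded at each stage, which is precisely why Lemma~\ref{lem1} and the boundedness of $J$ were established beforehand; this is where the bulk of the bookkeeping lies, while the uniform character of the convergence follows from the time independence of the Lyapunov function and of the bound on its derivative.
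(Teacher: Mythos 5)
Your proof is correct and is built from the same ingredients as the paper's: the same Lyapunov function, the same closed-loop pH structure yielding $\dot{\widetilde{H}}_d=-(K_Iz+K_cx_c)^\top R_c(K_Iz+K_cx_c)$, boundedness of the state via radial unboundedness, Lemma \ref{lem1} to bound $\Psi$, $\Psi^{-1}$, their time derivatives and $J$, and iterated applications of Barbalat's lemma. What differs is the order of the Barbalat chain. The paper applies Barbalat first to $\dot{\tilde q}$ to get $\widetilde{\mathtt{P}}\to\mathbf{0}_n$, then to $\dot{\widetilde{\mathtt{P}}}$ to get $z\to\mathbf{0}_n$, and only last to $\dot{\widetilde{H}}_d$ to get $x_c\to\mathbf{0}_n$. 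You instead start from the integrability of $w^\top R_c w$ with $w:=K_Iz+K_cx_c$, which is guaranteed because $\widetilde{H}_d$ is nonincreasing and bounded below, obtain $w\to\mathbf{0}_n$ first, and then peel off $\widetilde{\mathtt{P}}$, $z$, $x_c$, and $\tilde q$ in turn. Your ordering is in fact the safer one: the paper's opening step needs $\int_0^t\dot{\tilde q}\,d\tau=\tilde q(t)-\tilde q(0)$ to converge, which does not follow from boundedness of $\tilde q$ alone, whereas every integral you invoke is either the convergent integral of $-\dot{\widetilde{H}}_d$ or the integral of the derivative of a signal already shown to have a limit. The price is one extra differentiation (of $\dot w=K_I\Psi(q)\widetilde{\mathtt{P}}-(K_I+K_c)R_cw$), whose boundedness follows from the same bookkeeping the paper already performs; and in the final pass both you and the paper treat the boundedness of $\tfrac{d}{dt}J(q,\mathtt{P})\widetilde{\mathtt{P}}$ with the same level of informality, so no rigor is lost relative to the source.
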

\begin{proof}
Note that
\begin{equation}
\arraycolsep=1.6pt
\def\arraystretch{1.5}
\begin{array}{rcl}
 \widetilde{H}_d(\mathbf{0}_{n},\mathbf{0}_{n},\mathbf{0}_{n}) &=& 0 \\
 \widetilde{H}_d(\tilde{q},\widetilde{\mathtt{P}},x_c) &>& 0, \; \forall \ \rea^{n}\times\rea^{n}\times\rea^{n}-\{ \mathbf{0}_{n},\mathbf{0}_{n},\mathbf{0}_{n}\}.
\end{array} \label{eq:hamiltonian1pos}
\end{equation} 
Thus, $\widetilde{H}_{d}(\tilde{q},\widetilde{\mathtt{P}},x_c)$ is \textit{positive definite} with respect to the equilibrium. 
Furthermore, replacing \eqref{eq:uhat1} in \eqref{eq:systemerror}, the dynamics of the augmented state vector take the form
 \begin{equation*}\label{eq:syserrcl}
 \begin{bmatrix}
  \dot{\tilde{q}} \\[0.15cm] \dot{\widetilde{\mathtt{P}}} \\[0.15cm] \dot{x}_{c} 
 \end{bmatrix}  =  \begin{bmatrix}
{\mathbf{0}_{n\times n}} & {\Psi (q)} & {\mathbf{0}_{n\times n}} \\[0.15cm] -\Psi^\top (q) & J(q,\mathtt{P}) & {\mathbf{0}_{n\times n}} \\[0.15cm] {\mathbf{0}_{n\times n}} & {\mathbf{0}_{n\times n}} & -R_{c}
                                        \end{bmatrix}
\begin{bmatrix}
 \frac{\partial \widetilde{H}_{d}}{\partial \tilde{q}}(\tilde{q},\widetilde{\mathtt{P}},x_c) \\[0.15cm] \frac{\partial \widetilde{H}_{d}}{\partial \widetilde{\mathtt{P}}}(\tilde{q},\widetilde{\mathtt{P}},x_c) \\[0.15cm] \frac{\partial \widetilde{H}_{d}}{\partial x_{c}}(\tilde{q},\widetilde{\mathtt{P}},x_c)
\end{bmatrix}.
\end{equation*}
Hence, since $J(q,\mathtt{P})$ is skew-symmetric, 
\begin{equation}\label{eq:hamiltonian1dot}
    \dot{\widetilde{H}}_d=-\left(\frac{\partial \widetilde{H}_d}{\partial x_c}(\tilde{q},\widetilde{\mathtt{P}},x_c)\right)^\top R_c \left(\frac{\partial \widetilde{H}_d}{\partial x_c}(\tilde{q},\widetilde{\mathtt{P}},x_c)\right)\leq 0.
\end{equation}
Note that $\widetilde{H}_{d}(\tilde{q},\widetilde{\mathtt{P}},x_c)$ is \textit{radially unbounded}, which, in combination with \eqref{eq:hamiltonian1dot}, ensures that $\tilde{q},\widetilde{\mathtt{P}}$, and $x_{c}$ are \textit{bounded}. Thus, since we consider that $q_{d}(t)$ is bounded, we get that $q, \ \mathtt{P}$, and $z$ are bounded. Thus, it follows from Lemma \ref{lem1} that $\lVert \Psi(q)\rVert<\infty$ and $\lVert J(q,\mathtt{P})\rVert<\infty$, which with the boundedness of the state and the pH structure of the closed-loop system, imply that $\dot{\tilde{q}}, \ \dot{\widetilde{\mathtt{P}}}, \dot{x}_{c}$, and $\dot{z}$ are bounded as well. Now, differentiating the dynamics of $\tilde{q}$, we get
\begin{equation*}
 \ddot{\tilde{q}} = \left( \displaystyle\frac{d\Psi}{dt}(q) \right)\widetilde{\mathtt{P}} + \Psi(q)\dot{\widetilde{\mathtt{P}}},
\end{equation*} 
which, invoking Lemma \ref{lem1}, is bounded. Accordingly, $\dot{\tilde{q}}$ is \textit{uniformly continuous}, and it follows from Barbalat's lemma that $\dot{\tilde{q}}\to \mathbf{0}_{n}$ as $t\to\infty$. Furthermore,
\begin{equation}
 \dot{\tilde{q}}\to \mathbf{0}_{n} \implies \widetilde{\mathtt{P}}\to\mathbf{0}_{n} \label{Ptozero}
\end{equation} 
as $t\to\infty$.

Differentiating the dynamics of $\widetilde{\mathtt{P}}$, we obtain
\begin{equation}
 \ddot{\widetilde{\mathtt{P}}} = -\left( \displaystyle\frac{d\Psi^{\top}}{dt}(q) \right)K_{I}z - \Psi(q)K_{I}\dot{z}+J(q,\mathtt{P})\dot{\widetilde{P}},
\end{equation} 
where we used \eqref{Ptozero}. Therefore, $\ddot{\widetilde{\mathtt{P}}}$ is bounded, and consequently, $\dot{\widetilde{\mathtt{P}}}$ is uniformly continuous. Thus, applying Barbalat's lemma, we get that $\dot{\widetilde{\mathtt{P}}}\to\mathbf{0}_{n}$ as $t\to\infty$. Furthermore, we can establish the following chain of implications.
\begin{equation}
\arraycolsep=1.6pt
\def\arraystretch{1.5}
 \begin{array}{rcccl}
  \dot{\widetilde{\mathtt{P}}}\to\mathbf{0}_{n} &\implies& \Psi^{\top}(q)K_{I}z\to\mathbf{0} &\implies& z\to\mathbf{0}_{n},
 \end{array}\label{ztozero}
\end{equation} 
as $t\to\infty$. Now, note that, since $\dot{z}$ and $\dot{x}_{c}$ are bounded, $\dot{\tilde{H}}_{d}$ is uniformly continuous. Thus, again, invoking Barbalat's lemma we get the following
\begin{equation*}
 \begin{array}{rcccl}
  \dot{\tilde{H}}_{d}\to 0 &\implies& K_{c}x_{c}+K_{I}z \to \mathbf{0}_{n} & \implies & x_{c}\to\mathbf{0}_{n} 
 \end{array}
\end{equation*} 
as $t\to\infty$, where we used \eqref{ztozero}. Furthermore, substituting $x_{c}\to\mathbf{0}_{n}$ into \eqref{ztozero}, we get $\tilde{q}\to\mathbf{0}_{n}$ as $t\to\infty$. 
%
\end{proof}

\subsection{Saturated control without velocity measurements}

In this subsection, we modify the controller \eqref{eq:uhat1} to ensure that the control signals comply with \textbf{O2}, given in the problem formulation. Towards this end, consider the controller state $x_{c}\in\rea^{n}$ with dynamics
\begin{equation}
    \dot{x}_c=-R_c\left(\sum_{i=1}^{n} e_{i} \alpha_{i} \tanh \left(\beta_{i} z_{i}\right)+K_c x_c\right), \; i=1,\dots,n;\label{eq:xc2}
\end{equation}
where $e_{i}$ denotes an element of the canonical basis of $\rea^{n}$, the constant parameters $\alpha_{i},\beta_{i}$ are positive, the matrices $R_{c},K_{c}\in\rea^{n\times n}$ are positive definite, and $z$ is defined as in \eqref{eq:z}. The following proposition provides a saturated control law that addresses the global uniform asymptotic stabilization problem of \eqref{eq:systemerror}, which does not require velocity measurements.

\begin{proposition}\label{controllaw2}
Consider the augmented state vector $[\tilde{q}^{\top}, \widetilde{\mathtt{P}}^{\top},x_{c}^{\top}]^{\top}$, with dynamics \eqref{eq:systemerror1}-\eqref{eq:xc2}. Then, the control law
\begin{equation}\label{eq:uhat2}
    \hat{u}=-\sum_{i=1}^{n} e_{i} \alpha_{i} \tanh \left(\beta_{i} z_{i}\right)
\end{equation}
ensures that the closed-loop system has a globally uniformly asymptotically stable equilibrium at $(\tilde{q},\widetilde{\mathtt{P}},x_c) =(\mathbf{0}_{n},\mathbf{0}_{n},\mathbf{0}_{n})$ with Lyapunov function
\begin{equation}\label{eq:hamiltonian2}
    \widetilde{H}_{\tt sat}(\tilde{q},\widetilde{\mathtt{P}},x_c)=\sum_{i=1}^{n} \frac{\alpha_{i}}{\beta_{i}} \ln \left(\cosh \left(\beta_{i} z_{i}\right)\right) + \frac{1}{2}\widetilde{\mathtt{P}}^\top \widetilde{\mathtt{P}} + \frac{1}{2}x_c^\top K_c x_c.
\end{equation}
\end{proposition}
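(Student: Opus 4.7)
The plan is to mirror the proof of Proposition~\ref{controllaw1}, exploiting the elementary identity $\frac{d}{dz_i}\!\left[\frac{\alpha_i}{\beta_i}\ln(\cosh(\beta_i z_i))\right]=\alpha_i\tanh(\beta_i z_i)$, which is precisely what makes the saturating terms appearing in \eqref{eq:xc2} and \eqref{eq:uhat2} coincide with components of $\nabla \widetilde{H}_{\tt sat}$. First I would verify that $\widetilde{H}_{\tt sat}$ is positive definite and radially unbounded with respect to $(\mathbf{0}_n,\mathbf{0}_n,\mathbf{0}_n)$: each summand $\frac{\alpha_i}{\beta_i}\ln(\cosh(\beta_i z_i))$ is nonnegative, vanishes only at $z_i=0$, and grows like $\alpha_i |z_i|$ as $|z_i|\to \infty$, so when combined with the quadratic terms in $\widetilde{\mathtt{P}}$ and $x_c$ the function is proper and positive definite.

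Next I would substitute \eqref{eq:uhat2} into \eqref{eq:systemerror} and check that the closed-loop system admits the pH form
\[
\begin{bmatrix}\dot{\tilde q}\\ \dot{\widetilde{\mathtt{P}}}\\ \dot x_c\end{bmatrix}
=\begin{bmatrix}\mathbf{0}_{n\times n} & \Psi(q) & \mathbf{0}_{n\times n}\\ -\Psi^\top(q) & J(q,\mathtt{P}) & \mathbf{0}_{n\times n}\\ \mathbf{0}_{n\times n} & \mathbf{0}_{n\times n} & -R_c\end{bmatrix}
\begin{bmatrix}\frac{\partial \widetilde{H}_{\tt sat}}{\partial \tilde q}\\ \frac{\partial \widetilde{H}_{\tt sat}}{\partial \widetilde{\mathtt{P}}}\\ \frac{\partial \widetilde{H}_{\tt sat}}{\partial x_c}\end{bmatrix},
\]
since $\frac{\partial \widetilde{H}_{\tt sat}}{\partial \tilde q}=\sum_{i}e_i\alpha_i\tanh(\beta_i z_i)=-\hat u$, $\frac{\partial \widetilde{H}_{\tt sat}}{\partial \widetilde{\mathtt{P}}}=\widetilde{\mathtt{P}}$, and $\frac{\partial \widetilde{H}_{\tt sat}}{\partial x_c}=\sum_i e_i\alpha_i\tanh(\beta_i z_i)+K_c x_c$. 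Skew-symmetry of the top-left $2\times 2$ block together with $R_c=R_c^\top>0$ then yields $\dot{\widetilde{H}}_{\tt sat}=-(\partial_{x_c}\widetilde{H}_{\tt sat})^\top R_c (\partial_{x_c}\widetilde{H}_{\tt sat})\leq 0$, which, with the properness of $\widetilde{H}_{\tt sat}$, already gives uniform stability of the origin and boundedness of $(\tilde q,\widetilde{\mathtt{P}},x_c)$.

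For asymptotic convergence I would reproduce the Barbalat chain used in Proposition~\ref{controllaw1}. Boundedness of the error states together with boundedness of $q_d$ makes $q$ and $\mathtt{P}$ bounded, so Lemma~\ref{lem1} and \eqref{Psibound} ensure that $\|\Psi(q)\|$, $\|\Psi^{-1}(q)\|$, $\|d\Psi/dt\|$ and $\|J(q,\mathtt{P})\|$ are finite. Hence $\dot{\tilde q},\dot{\widetilde{\mathtt{P}}},\dot x_c,\dot z$ and $\ddot{\tilde q}$ are bounded, so Barbalat's lemma delivers $\dot{\tilde q}\to \mathbf{0}_n$, and consequently $\widetilde{\mathtt{P}}\to \mathbf{0}_n$. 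Differentiating the $\widetilde{\mathtt{P}}$-equation shows $\ddot{\widetilde{\mathtt{P}}}$ bounded; a second application of Barbalat's lemma yields $\dot{\widetilde{\mathtt{P}}}\to \mathbf{0}_n$, which reduces to $\Psi^\top(q)\sum_i e_i\alpha_i\tanh(\beta_i z_i)\to \mathbf{0}_n$. Finally, Barbalat's lemma applied to $\dot{\widetilde{H}}_{\tt sat}$ gives $\sum_i e_i\alpha_i\tanh(\beta_i z_i)+K_c x_c\to \mathbf{0}_n$, whence $x_c\to \mathbf{0}_n$ and $\tilde q\to \mathbf{0}_n$.

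The delicate step, compared with the linear controller of Proposition~\ref{controllaw1}, is extracting $z\to \mathbf{0}_n$ from $\Psi^\top(q)\sum_i e_i\alpha_i\tanh(\beta_i z_i)\to \mathbf{0}_n$. There the invertibility of $K_I$ made the implication immediate; here one has to argue component-wise, using that Lemma~\ref{lem1} supplies uniform invertibility of $\Psi^\top(q)$ along the bounded trajectory, so that each $\alpha_i\tanh(\beta_i z_i)\to 0$, and then that the boundedness of $z$ together with the strict monotonicity of $\tanh$ (its inverse being continuous on $(-1,1)$) forces $z_i\to 0$ rather than $z_i$ merely escaping to $\pm\infty$ along a saturation plateau. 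This is the only place where the nonlinear nature of the saturation really enters, and it is the main obstacle that prevents a verbatim transcription of the proof of Proposition~\ref{controllaw1}.
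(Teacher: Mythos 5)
Your proposal is correct and follows essentially the same route as the paper, which likewise establishes positive definiteness and radial unboundedness of $\widetilde{H}_{\tt sat}$, writes the closed loop in pH form, obtains \eqref{eq:hamiltonian2dot}, and then repeats the Barbalat chain of Proposition~\ref{controllaw1} using exactly the two facts you highlight: that $\frac{\partial \widetilde{H}_{\tt sat}}{\partial \tilde q}=\sum_i e_i\alpha_i\tanh(\beta_i z_i)$ vanishes iff $z=\mathbf{0}_n$, and that its time derivative is bounded whenever $\dot z$ is. Your caution about a ``saturation plateau'' is harmless but unnecessary: since $\tanh$ tends to $\pm 1$ at infinity, $\tanh(\beta_i z_i)\to 0$ already forces $z_i\to 0$ without invoking boundedness of $z$, so the step is no more delicate than the invertibility-of-$K_I$ argument in Proposition~\ref{controllaw1}.
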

\begin{proof}
Note that $\widetilde{H}_{\tt sat}(\tilde{q},\widetilde{\mathtt{P}},x_c)$ is positive definite with respect to the equilibrium point and is \textit{radially unbounded}. Moreover, the closed-loop system takes the form
\begin{equation*}
 \begin{bmatrix}
  \dot{\tilde{q}} \\[0.15cm] \dot{\widetilde{\mathtt{P}}} \\[0.15cm] \dot{x}_{c} 
 \end{bmatrix}  =  \begin{bmatrix}
{\mathbf{0}_{n\times n}} & {\Psi (q)} & {\mathbf{0}_{n\times n}} \\[0.15cm] -\Psi^\top (q) & J(q,\mathtt{P}) & {\mathbf{0}_{n\times n}} \\[0.15cm] {\mathbf{0}_{n\times n}} & {\mathbf{0}_{n\times n}} & -R_{c}
                                        \end{bmatrix}
\begin{bmatrix}
 \frac{\partial \widetilde{H}_{\tt sat}}{\partial \tilde{q}}(\tilde{q},\widetilde{\mathtt{P}},x_c) \\[0.15cm] \frac{\partial \widetilde{H}_{\tt sat}}{\partial \widetilde{\mathtt{P}}}(\tilde{q},\widetilde{\mathtt{P}},x_c) \\[0.15cm] \frac{\partial \widetilde{H}_{\tt sat}}{\partial x_{c}}(\tilde{q},\widetilde{\mathtt{P}},x_c)
\end{bmatrix},
\end{equation*}
and
\begin{equation}\label{eq:hamiltonian2dot}
    \dot{\widetilde{H}}_{\tt sat}=-\left(\frac{\partial \widetilde{H}_{\tt sat}}{\partial x_c}(\tilde{q},\widetilde{\mathtt{P}},x_c)\right)^\top R_c \left(\frac{\partial \widetilde{H}_{\tt sat}}{\partial x_c}(\tilde{q},\widetilde{\mathtt{P}},x_c)\right)\leq 0.
\end{equation}
The rest of the proof follows from the same arguments employed in the proof of Proposition \ref{controllaw1} noting that
\begin{equation*}
  \displaystyle\frac{\partial \widetilde{H}_{\tt sat}}{\partial \tilde{q}}(\tilde{q},\widetilde{\mathtt{P}},x_c) = \displaystyle\sum_{i=1}^{n} e_{i} \alpha_{i} \tanh \left(\beta_{i} z_{i}\right) = \mathbf{0}_{n} \iff z=\mathbf{0}_{n},
\end{equation*} 
and
\begin{equation*}
 \displaystyle\frac{d}{dt}\left( \displaystyle\frac{\partial \widetilde{H}_{\tt sat}}{\partial \tilde{q}}(\tilde{q},\widetilde{\mathtt{P}},x_c) \right) = \displaystyle\sum_{i=1}^{n} e_i e_i^\top \alpha_{i}\beta_{i}\left[\sech\left(\beta_{i} z_{i}\right)\right]^2\dot{z}
\end{equation*} 
is bounded if $\dot{z}$ is bounded. 

\end{proof}

\subsection{Passivity-based trajectory tracking controller}

Proposition \ref{pro:track} introduces main result of this paper, namely, a control law that addresses the trajectory tracking problem for \eqref{eq:systemtransform} while verifying \textbf{O1} and \textbf{O2}.
\begin{proposition}\label{pro:track}
 Consider the pH system \eqref{eq:systemtransform} in closed-loop with 
 \begin{equation}\label{eq:finalcontrollaw}
 \arraycolsep=1.6pt
\def\arraystretch{1.5}
 \begin{array}{rcl}
     u&=&\displaystyle\frac{\partial V}{\partial q}(q) - \sum_{i=1}^{n} e_{i} \alpha_{i} \tanh \left(\beta_{i} z_{i}\right)+\Psi^{-\top}(q_d)\left[\displaystyle\frac{d }{d t}\left(\Psi^{-1}(q_d)\dot{q}_d\right) - J_{d}(t) \Psi^{-1}(q_d)\dot{q}_d\right].
 \end{array}
\end{equation}
Then,
\begin{equation*}
 \begin{array}{rl}
  \displaystyle\lim_{t\to\infty}q(t) = q_{d}(t), & \displaystyle\lim_{t\to\infty}\mathtt{P}(t) = \mathtt{P}_{d}(t).
 \end{array}
\end{equation*} 
\end{proposition}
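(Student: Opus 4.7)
The plan is to recognize \eqref{eq:finalcontrollaw} as nothing but the superposition $u = u_d + \tilde{u}$, where $u_d$ is the feedforward \eqref{eq:ud} and $\tilde{u}$ is the error-stabilizing feedback \eqref{eq:utilde} in which the damping/shaping term $\hat{u}$ is the saturated controller \eqref{eq:uhat2} from Proposition \ref{controllaw2}. Once this identification is made, the convergence claim follows immediately from the result already established in Proposition \ref{controllaw2}, so the proof is essentially an algebraic bookkeeping step.

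First I would verify the decomposition explicitly. Substituting \eqref{eq:uhat2} into \eqref{eq:utilde} gives
\[
\tilde{u} = \frac{\partial V}{\partial q}(q) - \frac{\partial V}{\partial q}(q_d) - \sum_{i=1}^{n} e_{i}\alpha_{i}\tanh(\beta_{i}z_{i}),
\]
and adding $u_d$ as written in \eqref{eq:ud} the two occurrences of $\frac{\partial V}{\partial q}(q_d)$ cancel, leaving precisely the right-hand side of \eqref{eq:finalcontrollaw}. Hence applying \eqref{eq:finalcontrollaw} to \eqref{eq:systemtransform} is exactly equivalent to applying the feedback $\hat{u}$ of \eqref{eq:uhat2} to the error pH system \eqref{eq:systemerror}, with the state augmented by the controller dynamics \eqref{eq:xc2}.

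Next, I would check that the hypotheses needed to invoke Proposition \ref{controllaw2} are in force: $q_d$ is smooth, bounded and feasible by assumption, and Lemma \ref{lem1} together with Assumption \ref{ass2} ensures that $\Psi(q_d)$, $\Psi^{-1}(q_d)$ and their time derivatives, as well as $J_d(t)$, are well-defined and bounded, so $u_d$ is a legitimate, bounded signal. Proposition \ref{controllaw2} then yields global uniform asymptotic stability of $(\tilde{q},\widetilde{\mathtt{P}},x_c) = (\mathbf{0}_n,\mathbf{0}_n,\mathbf{0}_n)$, which by the error definitions in \eqref{errors} is exactly $\lim_{t\to\infty} q(t) = q_d(t)$ and $\lim_{t\to\infty}\mathtt{P}(t) = \mathtt{P}_d(t)$.

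There is no genuine obstacle here beyond the symbolic manipulation above; the real analytical work, namely the Barbalat-based chain of implications $\dot{\tilde{q}}\to 0 \Rightarrow \widetilde{\mathtt{P}}\to 0 \Rightarrow z\to 0 \Rightarrow x_c\to 0 \Rightarrow \tilde{q}\to 0$, has already been carried out in Proposition \ref{controllaw1} and inherited by Proposition \ref{controllaw2}. What this final proposition contributes is only the observation that, after the cancellation of the potential-gradient term at the desired trajectory, the implementable controller depends on $q$ and on the pre-computable functions of $q_d(t)$, but never on $p$ or on $\dot{q}$, so both objectives \textbf{O1} (no velocity measurement) and \textbf{O2} (the $\tanh$-saturated term bounded by $\sum_i \alpha_i$, with the remaining drift and feedforward pieces bounded by assumption) are simultaneously satisfied.
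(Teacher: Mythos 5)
Your proposal is correct and follows essentially the same route as the paper's own proof: it identifies \eqref{eq:finalcontrollaw} as $u_d+\tilde{u}$ with $\hat{u}$ given by \eqref{eq:uhat2}, invokes Proposition \ref{controllaw2} for the error system, and concludes via $\tilde{q}\to\mathbf{0}_n$, $\widetilde{\mathtt{P}}\to\mathbf{0}_n$. The only difference is that you spell out the cancellation of $\frac{\partial V}{\partial q}(q_d)$ and the boundedness of $u_d$ explicitly, which the paper leaves implicit.
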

\begin{proof}
 The proof follows from \eqref{eq:ud}, \eqref{eq:utilde}, Proposition \ref{controllaw2}, and noting that 
 \begin{equation*}
  \begin{array}{rl}
   \tilde{q} \to 0 \implies q\to q_{d}, & \widetilde{\mathtt{P}} \to 0 \implies \mathtt{P}\to \mathtt{P}_{d}.
  \end{array}
 \end{equation*} 
\end{proof}
\begin{remark}
 For robotic arms, the control law \eqref{eq:finalcontrollaw} can be physically interpreted as follows. The gradient of the potential energy compensates the gravitational forces acting on the system. The second term of the right-hand ensures that the trajectories of the system converge towards the desired ones, and the last term guarantees that the system keeps tracking such trajectories.  
\end{remark}
\begin{remark}
 The shape of the function $\tanh(\cdot)$, the fact that the desired trajectories are bounded, and Assumption \ref{ass1} ensure that the control law \eqref{eq:finalcontrollaw} is saturated, where the parameters $\alpha_{i}$ can be adjusted to comply with \textbf{O2}.
\end{remark}

\section{Implementation in the PERA system}\label{sec:implementation}

\begin{figure}[h]
 \centering
 \includegraphics[scale=0.2]{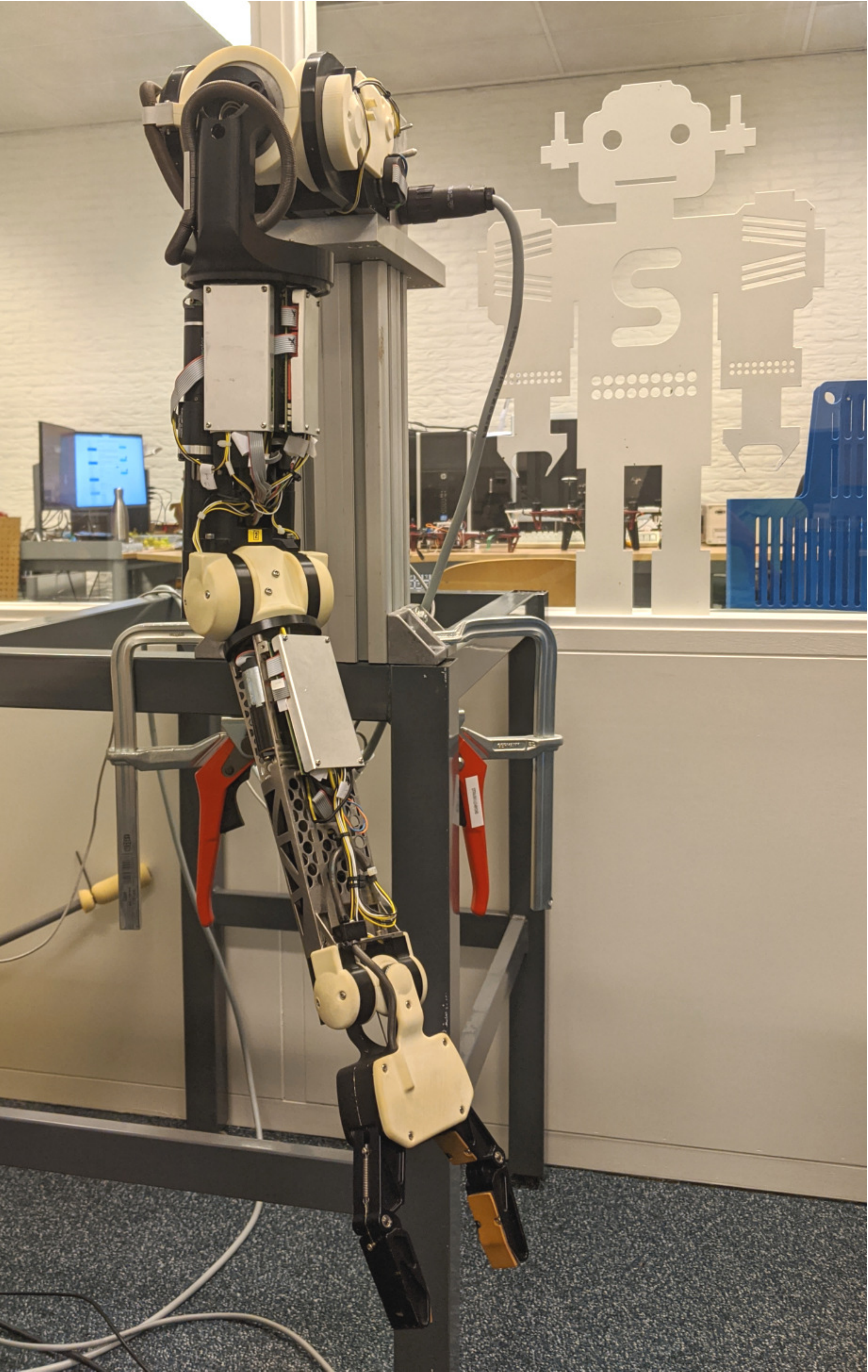}
 \caption{PERA system}
 \label{fig:PERA}
\end{figure}

To corroborate the effectiveness of the methodology proposed in the previous section, we implement the controller \eqref{eq:finalcontrollaw} in the PERA system, depicted in Fig. \ref{fig:PERA}, which is a robotic arm with seven DoF that intends to emulate the motion of a human arm. To illustrate the applicability of the saturated tracking controller, we consider only three degrees of freedom, namely, the shoulder pitch $q_{1}$, shoulder yaw $q_{2}$, and elbow pitch $q_{3}$. Accordingly, the dynamics of the reduced system can be expressed as a pH system of the form \eqref{eq:porthamiltonianframework}, with $n=3$, and
\begin{equation*}
 \arraycolsep=1.6pt
\def\arraystretch{1.5}
 \begin{array}{rcl}
  M(q)&: =& \begin{bmatrix}
           \displaystyle\sum_{i=1}^3 \mathcal{I}_{i} + a\sin^{2}(q_{1}) & 0 & \mathcal{I}_{3}\cos(q_{1}) \\ 0 & \displaystyle\sum_{j=2}^3 \mathcal{I}_{j}+a & 0 \\ \mathcal{I}_{3}\cos(q_1) & 0 & \mathcal{I}_{3} 
          \end{bmatrix} \\
  V(q)&: =& - \left( \frac{1}{3}m_{1}+m_{2} \right)gL_{1}\cos(q_{1})+\frac{1}{3}m_{2}gL_{2}b(q)\\
  a& := & (m_{1}+m_{2})L_{1}^{2} \\
  b(q)& : = & \cos(q_2)\sin(q_1)\sin(q_3) - \cos(q_1)\cos(q_3),
 \end{array}
\end{equation*} 
where the constant parameters of the system are provided in Table \ref{table1}. Moreover, the saturation limit of the motors are
\begin{equation}
 \begin{array}{rcl}
  \lvert u_{1} \rvert \leq 18.77 , & \lvert u_{2} \rvert \leq 3.32, &  \lvert u_{3} \rvert \leq 7.72.
 \end{array}\label{limits}
\end{equation} 
For further details about the PERA system, we refer the reader to \cite{rijs2010philips}.
\begin{table}[h]
  \centering
  \caption{System parameters}
  \label{table1}
\begin{center}
\begin{tabular}[t]{|c|c|c|c|}\hline
$g=9.81$  & $L_{1}=0.32$ & $L_{2}=0.48$ & $m_{1}=2.9$\\\hline
 $m_{2}=1$ & $\mathcal{I}_{1}=0.03$ &
 $\mathcal{I}_{2}=4\times 10^{-3}$
 & $\mathcal{I}_{3}=0.02$\\\hline
\end{tabular}
\end{center}
\end{table}

The control objective is to track a circular trajectory with the end-effector of the system. To this end, we parameterize the desired trajectory as follows
\begin{equation}\label{eq:desiredtrajectory}
    q_d(t)=\left[\begin{array}{c}
        0 \\[6pt]
        \arcsin\left(\frac{r}{L_2}\right)\sin\left(\frac{2\pi}{T}t\right)\\[6pt]
        \frac{\pi}{2}-\arcsin\left(\frac{r}{L_2}\right)\cos\left(\frac{2\pi}{T}t\right)
    \end{array}\right].
\end{equation}
with $r \in \mathbb{R}_+$ the radius of the circle, $T \in \mathbb{R}_+$ the period of the circle trajectory and $t \in \mathbb{R}_+$ the time.

To visualize the trajectory of the end-effector we use MATLAB\textsuperscript{\textregistered} and the Robotic Toolbox developed in \cite{corke1996robotics}, as it is illustrated in Fig. \ref{fig:desiredtrajectory}.
%
\begin{figure}[h!]
     \centering
     \begin{subfigure}[b]{0.45\textwidth}
         \centering
 \includegraphics[width=\textwidth]{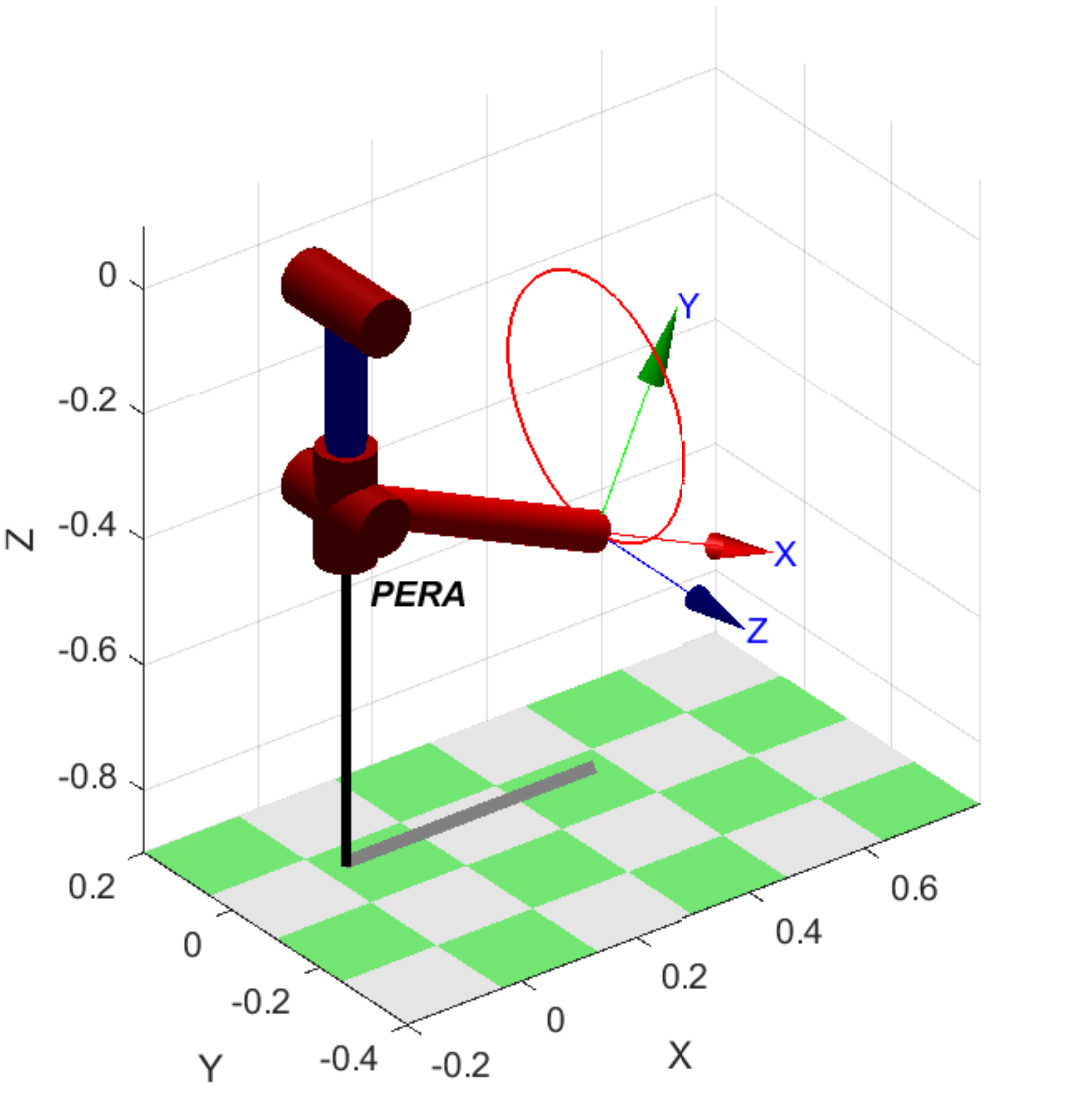}
\caption{3D view}
         \label{fig:desiredtrajectory1}
     \end{subfigure}
     \hfill
     \begin{subfigure}[b]{0.45\textwidth}
         \centering
   \includegraphics[width=\textwidth]{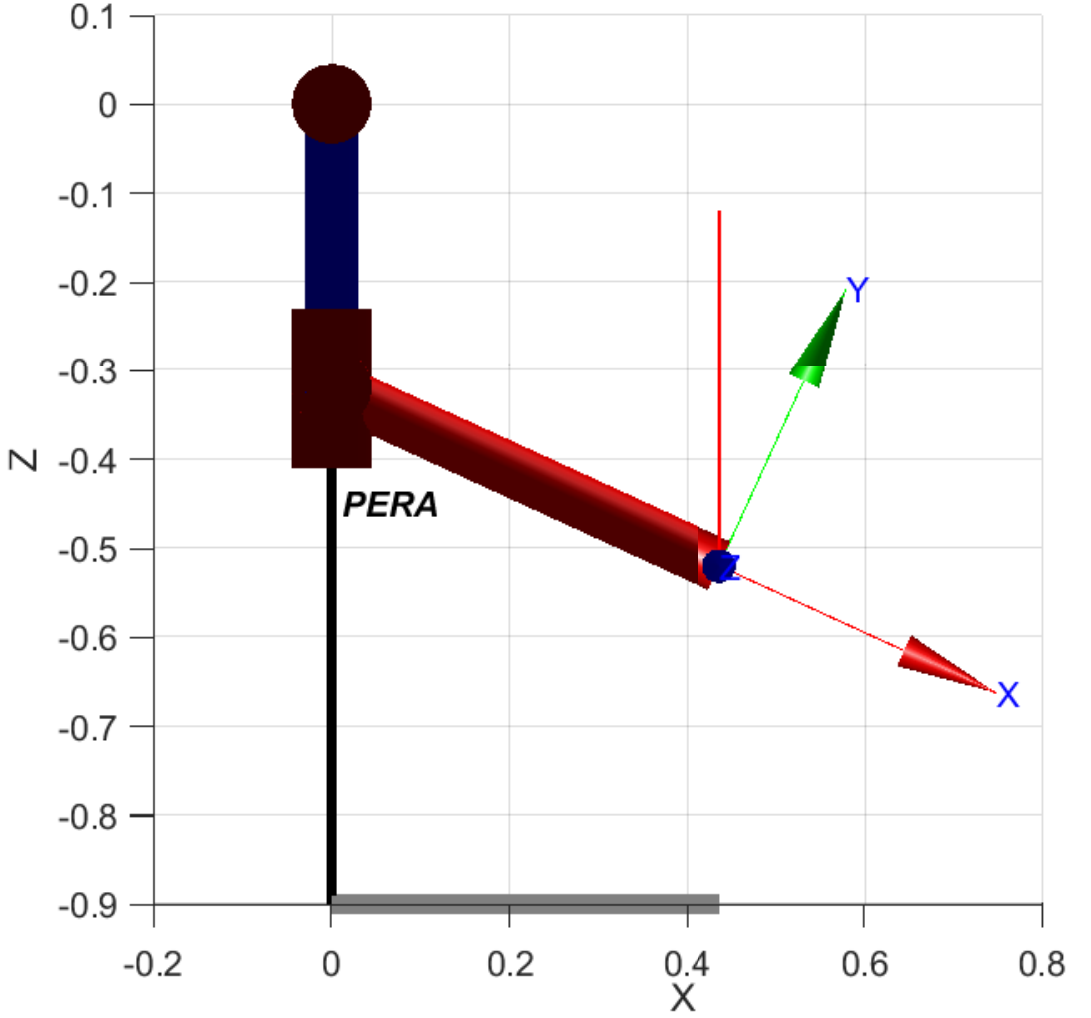}
   \caption{Side view}
         \label{fig:desiredtrajectory2}
     \end{subfigure}
        \caption{Visualization of the desired trajectory of the PERA ($r=0.2\ [m]$)}
        \label{fig:desiredtrajectory}
\end{figure}

\subsection{Simulations}
Before testing the controller on the PERA system, simulations are carried out to verify the stability of the closed-loop system. To simplify the gains selection, we select $K_{c}$ and $R_{c}$ as diagonal matrices, while the values of $\alpha$ are set such that the saturation limit of the motors cannot be reached. The proposed control parameters are
\begin{equation*}
    \begin{array}{l}
    \alpha = \left[\begin{array}{c}
        11.0 \\[6pt]
        1.7 \\[6pt]
        6.0
    \end{array}\right], \; K_c=\text{diag}\{1,\ 2,\ 0.1\} \\[0.8cm] \beta=\left[\begin{array}{c}
        40 \\[6pt]
        30 \\[6pt]
        30
    \end{array}\right], \; R_c=\text{diag}\{0.4,\ 0.11,\ 0.5\}.
    \end{array}
\end{equation*}
To perform the simulations, we consider the initial conditions $q_0=p_{0}=\mathbf{0}_3$. The results of such simulations are plotted in Fig. \ref{fig:simulation}, where it can be observed that the trajectory tracking objective is achieved.
\begin{figure}[h]
\centering
\includegraphics[width=0.7\textwidth]{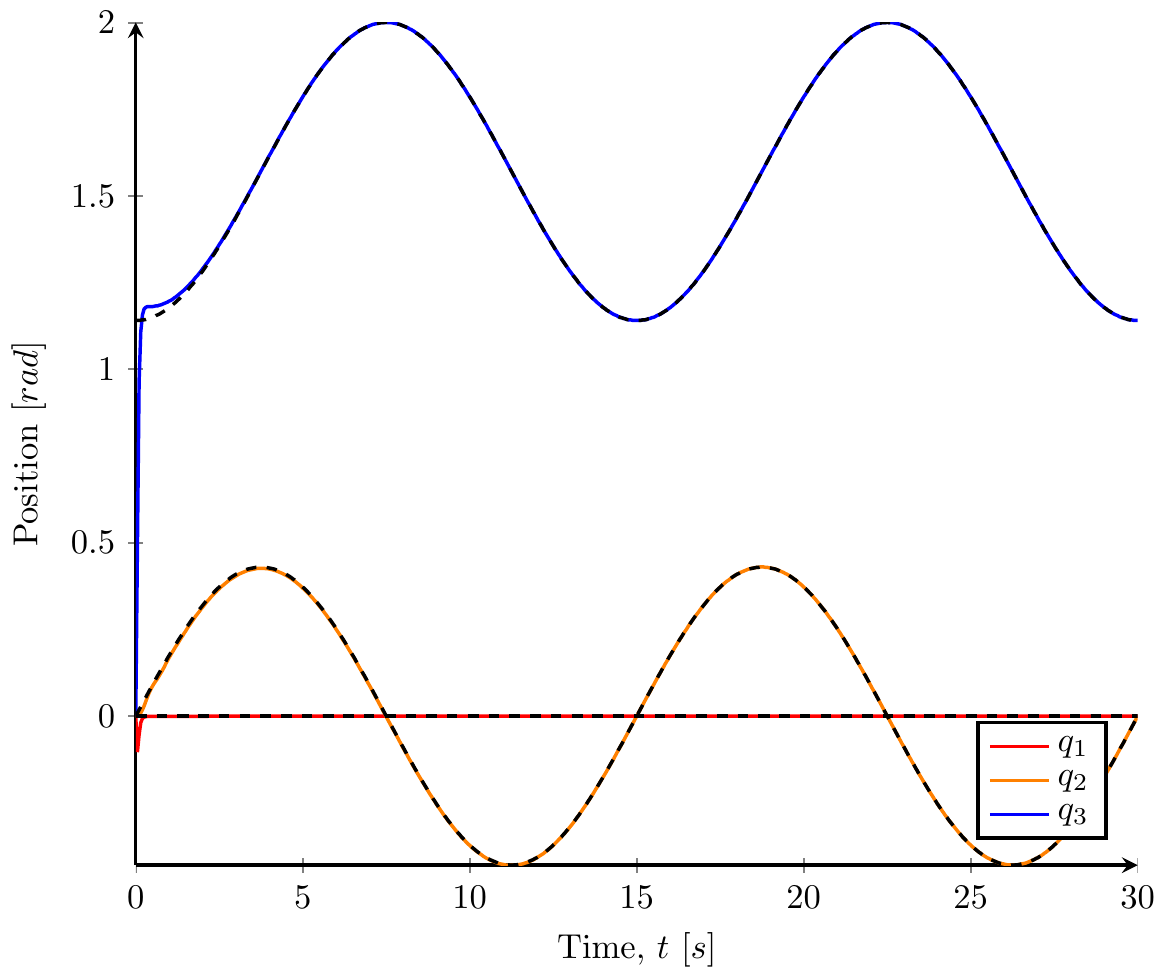}
\caption{Simulation results. The solid lines represent the trajectories described by the joints, while dashed lines represent the desired trajectories.}
\label{fig:simulation}
\end{figure}

%
%
%

\subsection{Experiments}
In contrast to the simulations, during the practical implementation of the controller, the selection of the control gains play a crucial role to guarantee an appropriate performance of the closed-loop system. Given the amount of control parameters to be tuned, the trial and error approach to select the gains proved to be challenging. The control parameters are selected as
\begin{equation}\label{eq:gaintuning}
    \begin{array}{l}
    \alpha = \left[\begin{array}{c}
        11.0 \\[6pt]
        2.0 \\[6pt]
        6.0
    \end{array}\right], \; K_c=\text{diag}\{30,\ 20,\ 200\}, \\[.8cm]\beta=\left[\begin{array}{c}
        400 \\[6pt]
        100 \\[6pt]
        120
    \end{array}\right], \; R_c=\text{diag}\{1,\ 0.1,\ 4500\}\times 10^{-4}. 
    \end{array}
\end{equation}
where the values of $\alpha$ are, again, chosen such that the limits provided in \eqref{limits} are not reached. The initial conditions of the experiments are the same as in the simulations. However, to ensure that the control task is executed correctly, we slightly modify the desired trajectory such that the end-effector is driven towards the circular trajectory, and then it starts to track it. The experimental results are shown in Fig. \ref{fig:experiments}, where it can be noticed that the system tracks the desired trajectory with a small deviation, particularly notorius in $q_{3}$. This error in the trajectory may be caused by several non-modeled phenomena that affect the behavior of the system, e.g., the friction in the joints or the anti-symmetry in the motors. Nevertheless, as it is depicted in the first column of Fig. \ref{fig:plots}, the absolute position error remains smaller than $0.05\ [rad]$, i.e. $\left|\tilde{q}\right|<0.05$. Furthermore, the control signals do not exceed the limits given in \eqref{limits} as it shown in the second column of Fig. \ref{fig:plots}, where the mentioned limits are plotted in gray dashed lines. A video of the experiments may be found in https://www.youtube.com/watch?v=2bW4PwwSo2s.
%
\begin{figure}[h]
\centering
\includegraphics[width=0.7\textwidth]{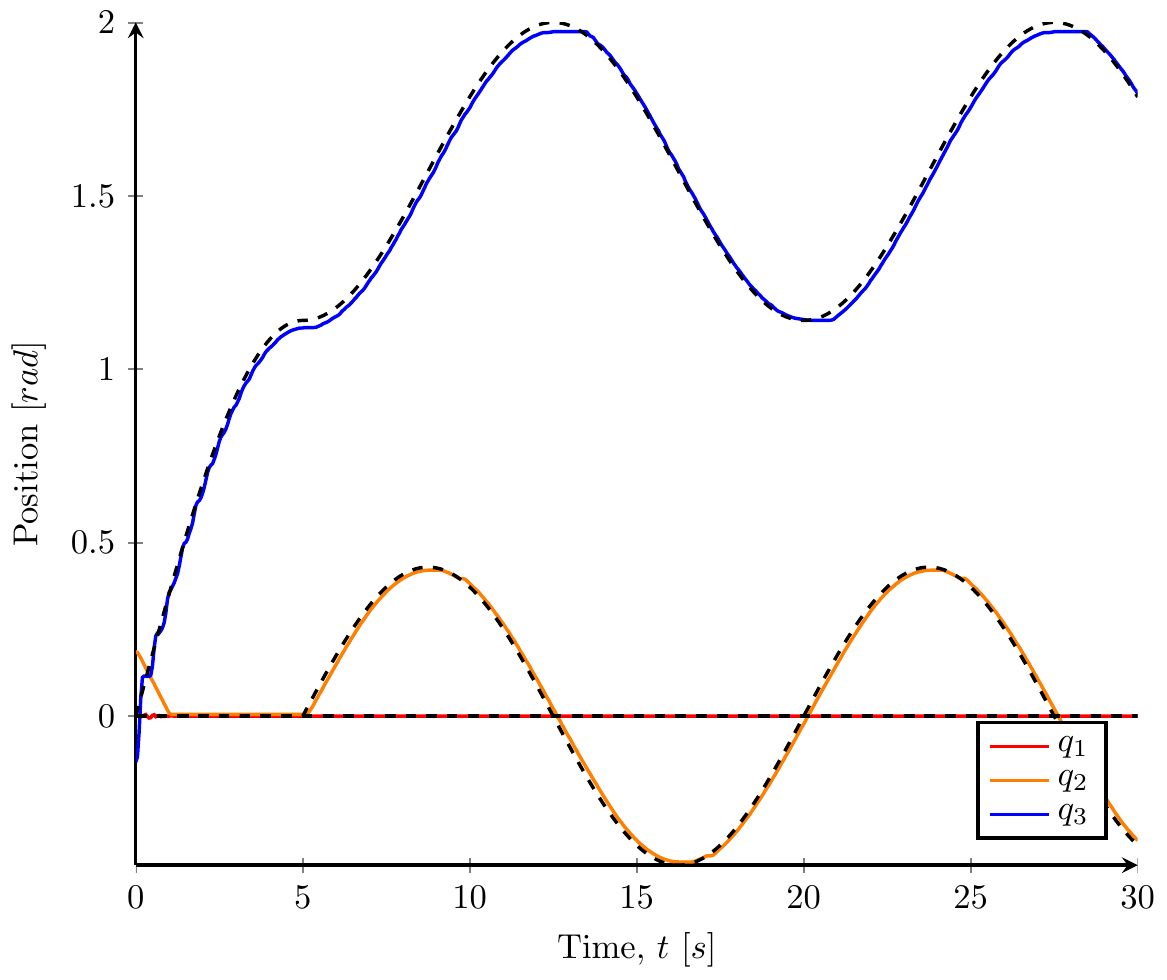}
\caption{Experimental results. The solid lines represent the trajectory of the joints during the experiment, while the dashed lines represent the desired trajectories.}
\label{fig:experiments}
\end{figure}
%
\begin{figure}[h!]
     \centering
\includegraphics[width=0.95\textwidth]{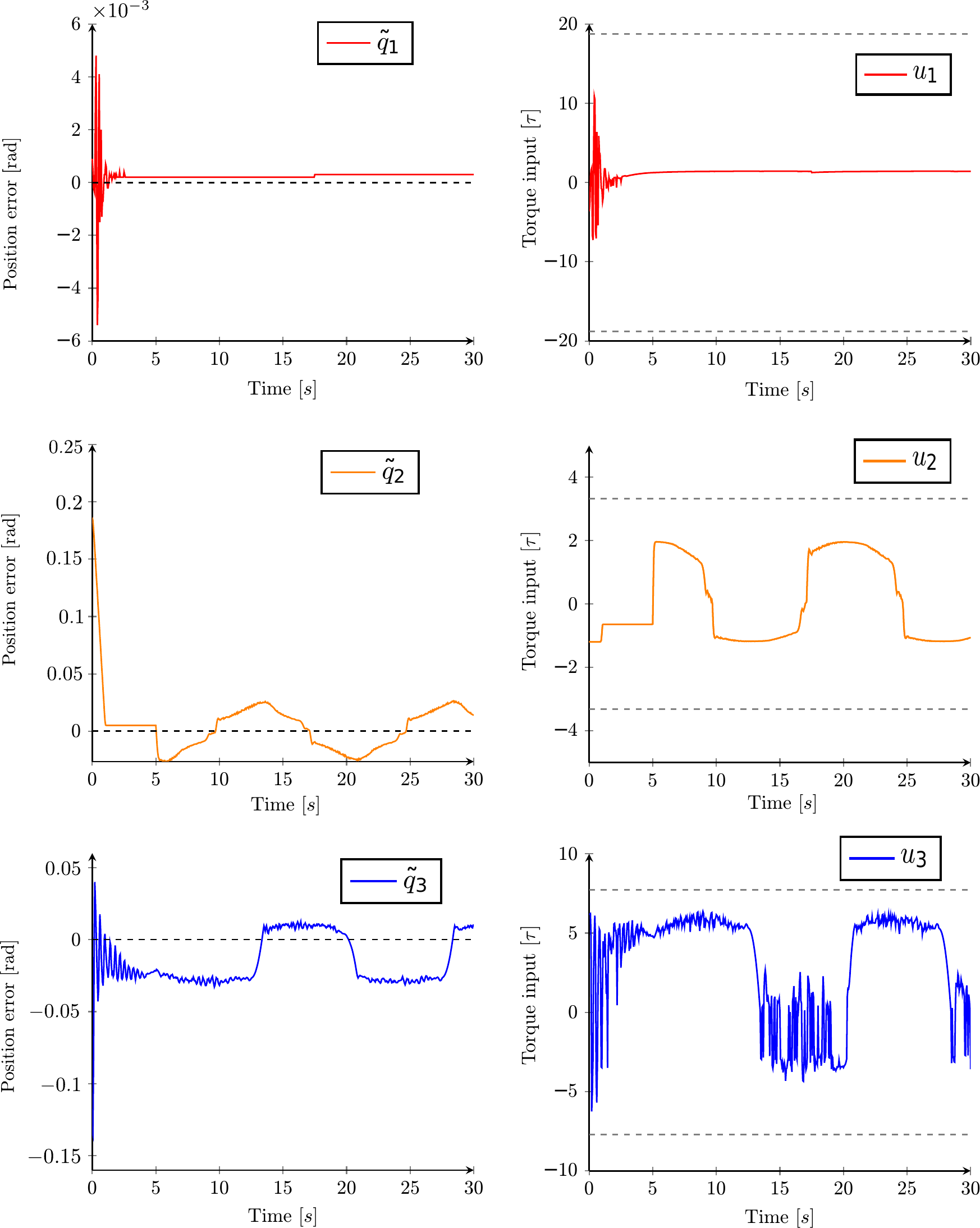}
        \caption{Position errors (first column) and control signals (second column).}
        \label{fig:plots}
\end{figure}

\section{Concluding remarks and future work}\label{sec:conclusion}
%
%

This paper presents a constructive control design methodology that solves the trajectory tracking problem for a class of robotic arms, where the control signals are saturated and do not require velocity measurements. Moreover, the control law uses gravity compensation based on the modeling of the gravitational force acting on the robotic arm. To prove that the trajectories of the closed-loop system \textit{globally uniformly asymptotically} converge to the desired trajectories, we conduct an analysis based on Barbalat's lemma. 

The control approach was implemented in the PERA system, where the experimental results show that the trajectories of the system track the desired ones with an absolute position error of the joints that remains smaller than $0.05\ [rad]$.  Additionally, the controller proved to be robust in presence of non-modeled phenomena such as the natural dissipation present in the joints of the system. The tuning of the control gains was carried out through a trial and error process, being the most challenging part of the controller implementation. Therefore, as future work, it is suggested to further investigate a systematic method for tuning of control gains. Another option is to investigate the possibility of including variable gains in the controller to improve its performance.

\bibliographystyle{ieeetr}
\bibliography{ref} 

\end{document}